\providecommand{\tabularnewline}{\\}
\theoremstyle{plain}
\newtheorem{thm}{\protect\theoremname}
\theoremstyle{definition}
\newtheorem{defn}[thm]{\protect\definitionname}
\theoremstyle{plain}
\newtheorem{prop}[thm]{\protect\propositionname}
\title{Root Cause Explanation of Outliers under Noisy Mechanisms}
\author {
    Phuoc Nguyen\textsuperscript{\rm 1},
    Truyen Tran\textsuperscript{\rm 1},
    Sunil Gupta\textsuperscript{\rm 1},
    Thin Nguyen\textsuperscript{\rm 1},
    Svetha Venkatesh\textsuperscript{\rm 1}
}
\providecommand{\definitionname}{Definition}
\providecommand{\propositionname}{Proposition}
\providecommand{\theoremname}{Theorem}
\providecommand{\definitionname}{Definition}
\providecommand{\propositionname}{Proposition}
\providecommand{\theoremname}{Theorem}
\begin{document}
\maketitle

\global\long\def\argmax{\operatornamewithlimits{argmax}}%
\global\long\def\argmin{\operatornamewithlimits{argmin}}%

\begin{abstract}
Identifying root causes of anomalies in causal processes is vital
across disciplines. Once identified, one can isolate the root causes
and implement necessary measures to restore the normal operation.
Causal processes are often modelled as graphs with entities being
nodes and their paths/interconnections as edge. Existing work only
consider the contribution of nodes in the generative process, thus
can not attribute the outlier score to the edges of the mechanism
if the anomaly occurs in the connections. In this paper, we consider
both individual edge and node of each mechanism when identifying the
root causes. We introduce a noisy functional causal model to account
for this purpose. Then, we employ Bayesian learning and inference
methods to infer the noises of the nodes and edges. We then represent
the functional form of a target outlier leaf as a function of the
node and edge noises. Finally, we propose an efficient gradient-based
attribution method to compute the anomaly attribution scores which
scales linearly with the number of nodes and edges. Experiments on
simulated datasets and two real-world scenario datasets show better
anomaly attribution performance of the proposed method compared to
the baselines. Our method scales to larger graphs with more nodes
and edges.
\end{abstract}

\section{Introduction}

Understanding the root causes behind anomalies in complex network
systems holds significant importance across various disciplines, ranging
from science to industry \citep{dhaou2021causal,pool2020lumos,yi2021semi}.
Once the causes are identified, one can isolate and implement necessary
measures to restore the normal operation of the process. Early fault
recognition would prevent costly damage to operations, services, and
products. For example, in complex network systems within modern manufacturing
industries and information services, the cost of system failure is
notably high \citep{ni2017ranking}, reaching as much as $\$20,000$
per minute of downtime in an automotive manufacturing plant \citep{djurdjanovic2003watchdog}.
The monitoring data and logs obtained from the processing nodes of
these systems often contain noise attributed to random fluctuations
in nodes and the links between them. In communication systems, for
instance, random delays between nodes can arise from bandwidth limitations
and network overhead \citep{zhang2001stability}. Given the intricate
dependencies between monitoring nodes and the substantial volume of
data involved, manual analysis of root causes becomes impractical.

Recent methods for root cause analysis (RCA) \citep{budhathoki2021did,budhathoki2022causal}
are based on a given causal structure of the system and a learned
functional causal model \citep{peters2017elements} to explain the
anomalous observations at a leaf node. These methods work by first
detecting the anomalous leaf, then utilising the causal structure
and counterfactual reasoning attribute the outlier scores to ancestor
nodes \citep{budhathoki2022causal}. The causal structure of a system
is a powerful tool enabling the formal analysis of the root cause
of unexpected event \citep{budhathoki2022causal}. It is a directed
acyclic graph (DAG) composed of nodes representing system components
and directed edges representing causal links or dependency connections
from parent nodes to child nodes. The observational data of nodes
are assumed to have additive noise, the edge connections are assumed
to be noise-free \citep{budhathoki2022causal}. However, in computer
networking systems, e.g., the connection between components can be
noisy or faulty due to varying workloads, faulty hardware, wear and
tear, or signal interference. Therefore, the root cause of an outlier
can also arise from an edge in addition to a node. This raises the
question of whether an anomalous edge could be detected by existing
algorithms. In a recent work, \citet{ni2017ranking} attempted to
detect faulty edges in those networks. Nevertheless, to the best of
our knowledge, the study of noisy causal links as root causes has
not been considered.

In this paper, we aim to fill this gap by generalising existing frameworks
to allow the detection of anomalous edges as well as anomalous nodes.
We consider changes at both the individual edges and model them via
a Bayesian linear regression, where the noise in the causal edges
is represented as the distribution of the regression weights.

In addition, we propose a causal contribution score called Bayesian
Integrated Gradient of Edge and Node noise (BIGEN) to attribute a
leaf anomaly score to the ancestor nodes and edges. First, given observations
of outliers, we infer the noise values of the causal edges represented
by Bayesian linear regression coefficients using the mode of the posterior
distribution (MAP estimate) \citep{bishop2006pattern}. Second, we
infer the noise values of the nodes and edges given the outlier observations.
Third, we use an attribution method to compute the contribution of
each noise term to the outlier score. Existing works for explaining
outliers use Shapley-based attribution methods that require summing
over all possible subsets of the ancestor nodes and edges \citep{sundararajan2020many,covert2021improving,budhathoki2022causal}.
This is computationally expensive, even for approximate methods, when
applied to large graphs. In this work, we apply the integrated gradient
(IG) method \citep{sundararajan2020many} along the path from some
references to the target noise for calculating the attribution scores.
We show that this attribution method is efficient for scaling to graphs
with thousands of nodes in attributing the root causes, compared to
the baseline methods based on Shapley values \citep{sundararajan2020many}.

Our contributions are: 

1. A framework for identifying the sources of changes in both nodes
and edges to reduce operational costs. 

2. Modelling the causal connections using Bayesian linear regression
to enable the inference of edge noises and applying the Shapley-based
attribution framework.

3. Introducing a new causal contribution score called BIGEN to efficiently
attribute a leaf anomaly to the ancestor nodes and edges. 

4. Demonstrating the effectiveness of the proposed methods on random
graph datasets and two real-world scenario datasets.

\section{Preliminaries}

\subsection{Outlier Scores}

Based on information theory, \citet{budhathoki2022causal} introduced
an outlier score that calibrates all probabilistic outlier scores.
The authors defined the outlier score by characterising the tail probability
of an event $X=x$ based on the distribution of some score space,
such as negative log likelihood or $z$-score, as follow:
\begin{align}
S_{X}(x) & =-\log P\left\{ -\log p(X)\ge-\log p(x)\right\} \label{eq:log-score}\\
\text{or,~}S_{X}(x) & =-\log P\left\{ |X-\mu_{x}|\ge|x-\mu_{X}|\right\} \label{eq:z-score}
\end{align}

\subsection{Functional Causal Mechanisms (FCMs)}

Given a causal graph represented as a DAG, its functional causal mechanisms
\citep{pearl2009causality,peters2017elements} can be described by
the following set of equations. For each node $j$:
\begin{align*}
X_{j} & =f_{j}(\text{Pa}_{j},\epsilon_{j})\\
\text{or,~\ensuremath{X_{j}}} & =\sum_{i\in PA_{j}}W_{ij}X_{i}+\epsilon_{j},\qquad\text{when \ensuremath{f_{j}} is linear}
\end{align*}
where $PA_{j}$ is the node indices of the parents of node $j$. These
equations represent the dependence of a node $X_{j}$ on its parent
nodes $X_{Pa_{j}}=\{X_{i}:i\in\text{Pa}{}_{j}\}$ and $\epsilon_{j}$
which is an additive noise random variable independent of $X_{i}$.
If we assume $f_{j}$ is linear, given an observation data matrix
$X$, we can fit a linear regression model for this causal model to
learn the weight parameters $W_{ij}$. A leaf outlier $X_{n}=f_{n}(\text{Pa}_{n},\epsilon_{n})$
can be recursively expressed as a function depending on all the noise
variables as follows:
\[
X_{n}=f(\epsilon_{1},\dots,\epsilon_{n})=f(\boldsymbol{\epsilon})
\]
where $P_{\boldsymbol{\epsilon}}=P_{\epsilon_{1}}\times\dots\times P_{\epsilon_{n}}$.

\subsection{Shapley-based RCA of outliers\label{subsec:Shapley-based-RCA-methods}}

\citet{budhathoki2022causal} employed the concept of Shapley values
\citep{shapley1953value} from cooperative game theory and randomised
experiment to measure the contribution of a noise term to the target
outlier score for an observed outlier at the leaf. The Shapley value,
in the context of a coalition game, is an axiom-based method which
uniquely divides (attributes) a pay-off among the players (the noise
variables $\{\epsilon_{j}\}$ in our study) \citep{sundararajan2020many}.

\paragraph{Shapley Classic}

The originally proposed Shapley value \citep{shapley1953value} for
each player $j$ is computed as the weighted average of its marginal
contributions for all possible coalitions (subsets of players), both
with and without the player $j$. Let $P={1,\dots,d}$ denote the
set of players, $Q\subseteq P$ be a subset, $|Q|$ denote its cardinality,
and $v$ be the value function. The Shapley value for the player $j$
is defined as:
\begin{equation}
\phi_{j}(v)=\sum_{Q\subseteq P\backslash j}\frac{|Q|!(|P|-|Q|-1)}{P!}\left(v(Q\cup i)-v(Q)\right)\label{eq:shapley-classic}
\end{equation}

\paragraph{Sampling }

This method approximates the Shapley values as a solution to the weighted
least squares problem \citep{janzing2020feature,covert2021improving}:
\begin{align}
\min_{\phi_{1},\dots,\phi_{n}}\sum_{Q\subseteq P}w(Q)\left(u(Q)-v(Q)\right)\label{eq:shapley-sampling}\\
w(Q)=\frac{d-1}{\left(\begin{array}{c}
d\\
|Q|
\end{array}\right)|Q|(d-|Q|)}\nonumber 
\end{align}
where the weighting function $w(Q)$ depends only on the cardinality
of the subset $Q$, and $u(S)=\sum_{i\in S}\phi_{i}$ represents the
approximation function.

\paragraph{Permutation}

This method approximates the Shapley values using Monte Carlo samples
of data and random permutations \citep{strumbelj2014explaining}.
It first draws a random instance $\epsilon^{m}$ from the data, then
chooses a random permutation of the independent variables, and finally
computes the marginal contribution as:
\begin{equation}
\phi_{j}(v)=\frac{1}{M}\sum_{m=1}^{M}\left(v(\epsilon_{+j}^{m})-v(\epsilon_{-j}^{m})\right)\label{eq:shapley-permute}
\end{equation}
where $v(\epsilon_{+j}^{m})$ is the prediction for $\epsilon$, but
with a random number of variables replaced by the values from $\epsilon^{m}$,
except for $\epsilon_{j}$, and $v(\epsilon_{-j}^{m})$ similarly,
but $\epsilon_{j}$ is also replaced by $\epsilon_{j}^{m}$.

\paragraph{Value function}

The value function $v$ in this study is the outlier score $S(x_{n})$
at a leaf node \citep{budhathoki2022causal}.

\section{Methods}

We first present our framework for RCA of an observed anomaly at a
leaf node.  We formulate a noisy mechanism to model the noise in
the edge connection between component nodes. This allows the leaf
node to be expressed as a function of both node and edge noise dependent
model. Finally, we present the integrated gradient (IG) based contribution
score of ancestor edge and node noises.

\subsection{Framework \label{subsec:Framework}}

For concreteness, let us consider a computer network system\footnote{Our method is applicable to all structural causal processes, e.g.,
in a manufacturing process with sensor readings at each nodes as the
variables.} as a running example. We have continuous observations from the network
components represented as the variables $X_{1},\dots,X_{n}$. The
causal relationships between these component variables are known and
given in the form of a directed acyclic graph (DAG). There is edge
noise in each connection from $X_{j}$ to parent to describe potential
noise between each parent-child pair. As example, this can be a random
delay in the connection between two servers due to a router malfunction
in between. Based on the causal mechanism \citep{peters2017elements},
we introduce noise to each causal link from each parent node $X_{i}\in\text{Pa}{}_{j}$
to $X_{j}$ in the causal graph to create a noisy mechanism, where
$\text{Pa}{}_{j}$ is the set of parent nodes of node $j$.
\begin{defn}
Noisy Causal Mechanisms.\label{def:Noisy-Causal-Mechanism.}

For node $j$ , a noisy mechanism is a generative process,
\begin{align}
\xi_{\cdot j} & \sim\mathcal{N}(0,\alpha^{-1}I)\label{eq:fcm1}\\
W_{\cdot j} & =\mu_{\cdot j}+\xi_{\cdot j}\label{eq:fcm2}\\
\epsilon_{j} & \sim\mathcal{N}(0,\beta^{-1})\label{eq:fcm3}\\
X_{j} & =f_{j}(W_{\cdot j}^{T}X_{\text{Pa}{}_{j}})+\epsilon_{j}\label{eq:noisy-mechanism}
\end{align}
where $f_{j}$ can be a nonlinear function, $\xi_{\cdot j}$ and $\epsilon_{j}$
are Gaussian edge and node noises with fixed variances $\alpha^{-1}$
and $\beta^{-1}I$ ($\alpha$ and $\beta$ are the precision parameters),
$\mu_{\cdot j}$ is the mean of $p(W_{\cdot j})$ with $W_{ij}$ being
the causal link from node $i$ to node $j$ for $i\in\text{Pa}{}_{j}$,
and we use $\cdot$ (dot) to denote the index varying in the set of
parent node $\text{Pa}{}_{j}$.
\end{defn}

We use this noisy mechanism to model the generative process of the
system during normal operations. During testing, suppose some node
or edge of the system behave strangely, either the node noise $\epsilon_{j}$
or some edge noise $\xi_{ij}$ is interfered with, causing anomalous
observations at downstream nodes and leaf nodes.
\begin{defn}
Abnormal Causal Mechanism\label{def:Abnormal-Causal-Mechanism} 

An abnormal causal mechanism of a node $j$ is a similar generative
process as the noisy causal mechanism of the node $j$ in Definition~\ref{def:Noisy-Causal-Mechanism.}
but with the noise distribution $p(\epsilon_{j})$ or $p(\xi_{ij})$
interfered.
\end{defn}

During abnormal operation, as defined in Definition~\ref{def:Abnormal-Causal-Mechanism},
the underlying anomalous noises $\xi_{ij}$ and $\epsilon_{j}$ render
the generative processes defined in Eq.~\ref{eq:noisy-mechanism}
to generate anomalous observations of the system. The RCA module then
collects these anomalies along with their ancestors' observations,
denoted as $X'$, for analysing and identifying the root causes using
an attribution algorithm. Note that this generalises the approach
presented in \citep{budhathoki2022causal} which only considers node
anomalies.

\paragraph{Fitting the Noisy Mechanism}

In this paper, we model a noisy mechanism using Bayesian linear regression,
allowing us to compute its posterior analytically. We use training
data collected during the normal operation of the system to fit the
posterior of the noisy mechanism, as described in the following theorem.

\begin{thm}
Posterior distribution of noisy mechanism \label{thm:Posterior-distribution-of}

For each node $j$, let $(X_{\text{Pa}{}_{j}},X_{j})$ be the set
of input-output observations of the noisy mechanism as defined in
Definition~\ref{def:Noisy-Causal-Mechanism.}. Assuming fixed $\alpha$,
and $\beta$ precision parameters. If the prior for the weights of
the causal links and the regression output are respectively: 
\begin{align*}
W_{\cdot j} & \sim\mathcal{N}\left(\mu_{\cdot j}^{0},\alpha^{-1}I\right)\\
X_{j} & \sim\mathcal{N}\left(W_{\cdot j}^{T}X_{\text{Pa}{}_{j}},\beta^{-1}I\right),
\end{align*}
then the posterior distribution of the weight vector $W_{\cdot j}$
of the mechanism $j$ is given by: 
\begin{align*}
W_{\cdot j}|X_{j} & \sim\mathcal{N}(\mu_{\cdot j},H)\,\,\text{for}\\
\mu_{\cdot j} & =H^{-1}\left(\alpha\mu_{\cdot j}^{0}+\beta X_{\text{Pa}{}_{j}}^{T}X_{j}\right)\\
H & =\alpha I+\beta X_{\text{Pa}{}_{j}}^{T}X_{\text{Pa}{}_{j}}.
\end{align*}
\end{thm}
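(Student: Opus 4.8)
The plan is to invoke Bayes' theorem and exploit conjugacy: the posterior is proportional to the product of the Gaussian likelihood and the Gaussian prior, and since each is the exponential of a quadratic form in $W_{\cdot j}$, the posterior is again Gaussian. The whole task then reduces to identifying its precision matrix and mean by completing the square. Concretely, I would start from
\[
\log p(W_{\cdot j}\mid X_j) = \log p(X_j\mid W_{\cdot j}) + \log p(W_{\cdot j}) + c,
\]
where $c$ collects every term independent of $W_{\cdot j}$ (the two Gaussian normalizers and the evidence $p(X_j)$). Substituting the assumed densities, the log-posterior equals, up to $c$,
\[
-\frac{\beta}{2}\left(X_j - X_{\text{Pa}_j}W_{\cdot j}\right)^{T}\left(X_j - X_{\text{Pa}_j}W_{\cdot j}\right) -\frac{\alpha}{2}\left(W_{\cdot j} - \mu_{\cdot j}^{0}\right)^{T}\left(W_{\cdot j} - \mu_{\cdot j}^{0}\right),
\]
treating $X_{\text{Pa}_j}$ as the stacked design matrix of parent observations so that $X_{\text{Pa}_j}^{T}X_{\text{Pa}_j}$ and $X_{\text{Pa}_j}^{T}X_j$ are the quantities appearing in the claim.

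Next I would expand both quadratics and group the result by powers of $W_{\cdot j}$. The second-order contributions are $-\tfrac{\beta}{2}W_{\cdot j}^{T}X_{\text{Pa}_j}^{T}X_{\text{Pa}_j}W_{\cdot j}$ from the likelihood and $-\tfrac{\alpha}{2}W_{\cdot j}^{T}W_{\cdot j}$ from the prior, whose sum is $-\tfrac12 W_{\cdot j}^{T}\left(\alpha I + \beta X_{\text{Pa}_j}^{T}X_{\text{Pa}_j}\right)W_{\cdot j} = -\tfrac12 W_{\cdot j}^{T}H\,W_{\cdot j}$; this reads off $H = \alpha I + \beta X_{\text{Pa}_j}^{T}X_{\text{Pa}_j}$ as the posterior precision. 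The first-order contributions are $\beta W_{\cdot j}^{T}X_{\text{Pa}_j}^{T}X_j$ and $\alpha W_{\cdot j}^{T}\mu_{\cdot j}^{0}$, which sum to $W_{\cdot j}^{T}\left(\alpha\mu_{\cdot j}^{0} + \beta X_{\text{Pa}_j}^{T}X_j\right)$. Comparing against the canonical Gaussian exponent $-\tfrac12(W_{\cdot j}-\mu_{\cdot j})^{T}H(W_{\cdot j}-\mu_{\cdot j}) = -\tfrac12 W_{\cdot j}^{T}H W_{\cdot j} + W_{\cdot j}^{T}H\mu_{\cdot j} + \text{const}$ and matching linear terms gives $H\mu_{\cdot j} = \alpha\mu_{\cdot j}^{0} + \beta X_{\text{Pa}_j}^{T}X_j$, i.e. $\mu_{\cdot j} = H^{-1}\left(\alpha\mu_{\cdot j}^{0} + \beta X_{\text{Pa}_j}^{T}X_j\right)$, exactly as stated. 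Because the exponent is purely quadratic with no residual higher-order terms, conjugacy guarantees that the posterior is genuinely Gaussian, so no separate verification of normalization is needed.

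There is no deep obstacle here; this is the standard Bayesian-linear-regression computation (cf. Bishop, PRML, Sec.~3.3), and the only real care is in the matrix bookkeeping — keeping the orientation of $X_{\text{Pa}_j}$ consistent and using the symmetry of $X_{\text{Pa}_j}^{T}X_{\text{Pa}_j}$ when completing the square. The one point worth flagging is a notational convention: $H$ as defined is the posterior \emph{precision} (inverse covariance), so the statement $W_{\cdot j}\mid X_j \sim \mathcal{N}(\mu_{\cdot j},H)$ should be read with covariance matrix $H^{-1}$; the mean formula $\mu_{\cdot j} = H^{-1}(\cdots)$ already makes this reading explicit.
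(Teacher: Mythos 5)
Your proposal is correct and is essentially the paper's proof: the paper simply defers to Section~3.3 of \citet{bishop2006pattern}, and your complete-the-square derivation of the Gaussian conjugate posterior is exactly the standard computation found there. Your flag that $H$ is the posterior \emph{precision} --- so $W_{\cdot j}\mid X_j \sim \mathcal{N}(\mu_{\cdot j}, H^{-1})$ is the intended reading of the theorem's $\mathcal{N}(\mu_{\cdot j}, H)$ --- is also right, and correctly identifies a notational slip in the statement.
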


\begin{proof}
Refer to Section~3.3 in \citep{bishop2006pattern}.
\end{proof}
\begin{prop}
MAP estimate of the noisy mechanism \label{prop:MAP-estimate-of}

Given the data and the prior in Theorem~\ref{thm:Posterior-distribution-of},
the maximum a posteriori (MAP) estimate of the weights $W_{\cdot j}$
is
\[
W_{\cdot j}^{MAP}=\mu_{\cdot j}=H^{-1}(\alpha\mu_{\cdot j}^{0}+\beta X_{\text{Pa}{}_{j}}^{T}X_{j})
\]
\end{prop}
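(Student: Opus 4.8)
The plan is to obtain the MAP estimate directly from the posterior characterised in Theorem~\ref{thm:Posterior-distribution-of}, using only the defining property that the MAP estimate is the \emph{mode} of the posterior, i.e.\ $W_{\cdot j}^{MAP} = \argmax_{W_{\cdot j}} p(W_{\cdot j}\mid X_j)$. Since Theorem~\ref{thm:Posterior-distribution-of} has already established that this posterior is Gaussian, $W_{\cdot j}\mid X_j \sim \mathcal{N}(\mu_{\cdot j}, H)$ with $H = \alpha I + \beta X_{\text{Pa}_{j}}^{T}X_{\text{Pa}_{j}}$ playing the role of the precision, the key observation is simply that a Gaussian density attains its maximum exactly at its mean, so the substantive derivation is already done and what remains is essentially bookkeeping.

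First I would write the posterior density up to a normalising constant and take its logarithm, which removes the exponential and leaves a quadratic form:
\[
\log p(W_{\cdot j}\mid X_j) = -\tfrac{1}{2}\,(W_{\cdot j} - \mu_{\cdot j})^{T} H\, (W_{\cdot j} - \mu_{\cdot j}) + \text{const}.
\]
Because $H$ is the sum of a positive-definite matrix ($\alpha I$ with $\alpha>0$) and the positive-semidefinite Gram matrix $\beta X_{\text{Pa}_{j}}^{T}X_{\text{Pa}_{j}}$, it is positive definite, so this quadratic form is strictly concave in $W_{\cdot j}$ and admits a unique maximiser. Differentiating and setting the gradient to zero yields $H(W_{\cdot j} - \mu_{\cdot j}) = 0$, and positive-definiteness of $H$ forces $W_{\cdot j} = \mu_{\cdot j}$. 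Substituting the posterior mean from Theorem~\ref{thm:Posterior-distribution-of} then gives $W_{\cdot j}^{MAP} = \mu_{\cdot j} = H^{-1}(\alpha\mu_{\cdot j}^{0} + \beta X_{\text{Pa}_{j}}^{T}X_j)$, exactly as claimed.

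There is no genuine obstacle here: the result reduces to the elementary fact that the mode of a Gaussian coincides with its mean, and Theorem~\ref{thm:Posterior-distribution-of} has already carried out the real work of deriving the posterior parameters. The only point requiring a moment's care is the notational convention, in which $H$ denotes the precision (inverse covariance) rather than the covariance; one should confirm that $H$ is invertible, which is guaranteed by its positive-definiteness, so that $\mu_{\cdot j}$ and hence the mode are well-defined and unique.
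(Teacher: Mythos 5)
Your proof is correct and follows essentially the same route as the paper, which simply notes that the MAP estimate is the mode of the posterior and that a Gaussian's mode coincides with its mean; your additional details (the log-quadratic form, positive-definiteness of $H$, and uniqueness of the maximiser) merely make explicit what the paper leaves implicit. You also correctly flag the paper's notational quirk that $H$ in Theorem~\ref{thm:Posterior-distribution-of} acts as a precision rather than a covariance, which does not affect the conclusion since mode and mean coincide either way.
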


\begin{proof}
By definition, the MAP estimate is the mode of the posterior distribution
\citep{bassett2019maximum}. Since the posterior is a Gaussian, its
mode coincides with its mean.
\end{proof}

\subsection{Noise dependent reparametrisation}

When considering RCA of a leaf outlier $X_{n}=f_{n}(\text{Pa}_{n},\epsilon_{n},\xi_{\cdot n})$,
it is convenient to re-parameterise $X_{n}$ as a function of the
node and edge noises $(\epsilon,\xi)$ to facilitate the attribution
algorithms later on. We can recursively rewrite each ancestor of $X_{n}$
in Eq.~\ref{eq:noisy-mechanism} up to the root node and arrive at
a function of these noises 
\begin{equation}
X_{n}=g(\epsilon,\xi)\label{eq:noise-dependant-funnction}
\end{equation}
where $P_{\boldsymbol{\epsilon}}=\prod_{i}P_{\epsilon_{i}}$, and
$P_{\xi}=\prod P_{\xi_{ij}}$ are independent noises. Using this re-parameterisation,
we can attribute the root cause of these anomalies directly to these
noise variables. Given a batch of abnormal values $X'$ with $x'_{n}$
being the observed anomaly at the leaf node $X_{n}$, we first infer
the noise $\xi'$, and $\epsilon'$ as follows.

\paragraph{Edge noise estimation}

We employ the MAP estimate of the posterior from Proposition~\ref{prop:MAP-estimate-of}
to compute the new edge weights $W_{\cdot j}'$ for each mechanism
$j$. We then use the posterior weights estimated from the training
data as the new prior. Finally, the estimated edge noise is defined
as:
\begin{equation}
\xi_{\cdot j}'=W_{\cdot j}'-W_{\cdot j}\label{eq:edge-noise}
\end{equation}

Note that if $X'$ comes from the same distribution as $X$, i.e.,
the generative process is normal, the new edge weights $W_{\cdot j}'$
will coincide with the mode $W_{\cdot j}$ of the fitted edge weight
posterior, rendering the edge noise being close to a zero vector.

\paragraph{Node noise estimation}

We use Eq.~\ref{eq:noisy-mechanism} to estimate the node noise as
\begin{align}
\epsilon_{j} & =X_{j}-f_{j}\left(W_{\cdot j}^{T}X_{\text{Pa}{}_{j}}\right).\label{eq:node-noise}
\end{align}

\subsection{Integrated Gradient of Edge and Node noise \label{subsec:Intergrated-Gradient-of}}

We introduce a Bayesian Integrated Gradient of Edge and Node noise
(BIGEN in short) to attribute a leaf anomaly score to ancestor nodes
and edges. Instead of using subsets as in Shapley values methods,
we use noise reference to explain the root cause. This approach offers
a performance gain compared to previous methods. Specifically, we
select a reference node noise $\epsilon'$ from the normal dataset
and the mean edge noise $\xi'=W$. Additional reference points can
also be chosen to calculate multiple contribution scores, followed
by averaging, to enhance the accuracy of the score.

Let $f$ be the score function in Eq.~\ref{eq:log-score} which is
a continuous and differentiable function. Then $\phi_{i}(x,x',f)$
is defined as the integral of the gradient of $f$ along the straight-line
path between $x$ and $x'$. Formally, we define the path between
$x$ and $x'$ as $\gamma(t)=tx+(1-t)x'$ for $t\in[0,1].$ Then,
the Integrated Gradient (IG) for the $i$th feature $x_{i}$ is defined
as: 
\begin{align}
\text{IG}{}_{i}(x,x',f) & =(x-x')\int_{t=0}^{1}\frac{\partial f(\gamma(t))}{\partial\gamma_{i}(t)}\frac{\partial\gamma_{i}(t)}{\partial t}dt\nonumber \\
 & =(x_{i}-x_{i}')\int_{t=0}^{1}\frac{\partial f(x'+t(x-x'))}{\partial x_{i}}dt\label{eq:integ-grad}
\end{align}

This score integrates the gradient along the path from the reference
to the inferred noise to calculate the attribution of each node and
edge towards the observed anomaly score. In our case, with two noise
variables $\epsilon$ and $\xi$, computing the IG for one requires
marginalising over the other noise variable. Therefore, we adjust
this IG to fit into our attribution to node and edge noises as follows:
\begin{align}
\text{IG}{}_{i}(\epsilon,\epsilon',f) & =\mathbb{E}_{\xi}\left[\epsilon_{i}-\epsilon_{i}'\right]\int_{t=0}^{1}\frac{\partial f_{t}}{\partial\epsilon_{i}}dt\label{eq:integ-node}\\
\text{IG}{}_{ij}(\xi,\xi',f) & =\mathbb{E}_{\epsilon}\left[\xi_{ij}-\xi_{ij}'\right]\int_{t=0}^{1}\frac{\partial f_{t}}{\partial\xi_{ij}}dt\label{eq:integ-egde}
\end{align}

This method is more advantageous than subset sampling, as it is linear
and only dependent on the number of discretised steps in the path.
In contrast, Shapley-based method \citep{sundararajan2020many} requires
summing over all possible subsets of the ancestor nodes and edges,
which grows exponentially with the number of nodes and edges.

\paragraph{Score function}

Since the joint distribution of nodes and edges is Gaussian, the conditional
distribution of the normal observations at the leaf node is also Gaussian.
In this case, the outlier score using the negative log-likelihood
feature in Eq.~\ref{eq:log-score} is equivalent to the outlier score
with the $z$-score feature in Eq.~\ref{eq:z-score}. An efficient
evaluation function can be derived using the error function as follows:
\begin{align}
S_{X}(x) & =-\log P\left\{ -\log p(X)\ge-\log p(x)\right\} \nonumber \\
 & =-\log P\left\{ \frac{|X-\mu_{X}|^{2}}{2\sigma_{X}^{2}}\ge\frac{|x-\mu_{X}|^{2}}{2\sigma_{X}^{2}}\right\} \nonumber \\
 & =-\log P\left\{ |\frac{X-\mu_{X}}{\sigma_{X}}|\ge|\frac{x-\mu_{X}}{\sigma_{X}}|\right\} \nonumber \\
 & =-\log\{1-\text{\ensuremath{\Phi}}(z)\}=-\log\text{\ensuremath{\Phi}}(-z)\label{eq:normal-cdf}
\end{align}
where $z=|\frac{x-\mu_{X}}{\sigma_{X}}|$, $\text{\text{\ensuremath{\Phi}}}(z)=\frac{1}{\sqrt{2\pi}}\int_{-\infty}^{z}e^{-\frac{t^{2}}{2}}dt$
is the standard normal cumulative distribution function, and $(\mu_{X},\sigma_{X})$
represents the maximum likelihood estimate of the marginal mean and
standard deviation of the marginal distribution of the target node.

\paragraph{Causal graph and noisy FCM}

While our focus is not on solving causal discovery, we assume a causal
graph is given. For each fixed weight noise vector, we have a set
of FCMs with additive node noise and their FCM parameters can be learned
from normal observation data \citep{peters2017elements}. \citet{budhathoki2022causal}
observed that when representing the FCM of a target leaf node as a
functional of all node noise, each noise vector takes on the role
of selecting a deterministic mechanism. We generalise this idea and
represent the FCMs as functionals of the edge and node noises, which
similarly play the role of choosing the deterministic mechanisms.
Inferred noises of outlier edges or nodes, therefore, will select
outlier mechanisms in either cases. We build upon the success of \citet{budhathoki2022causal}'s
approach, to infer functions and noise from data, and show that counterfactual
contribution score by changing the noise term w.r.t. a reference is
effective for causal attribution at both the node and edge levels.
This approach utilises Pearl's third ladder of causation \citep{pearl2009causality}
to enhance our understanding of the system's causes and effects.

\paragraph{Shapley values and IGs}

\citet{budhathoki2022causal} computed Shapley value contributions
numerically by averaging over all orderings sets. However for larger
number of variables, approximation is needed to be practical. In
the experiments, we sample orderings instead of using all orderings
and compare Shapley with early stopping (Shapley), subset sampling
(Sampling), and methods based on a fixed number of randomly generated
permutations (Permutation). For the IGs, however, no subsets of intervention
are needed but rather a gradient path is taken along the path from
some reference noises to the target noises. For each noise vector,
BIGEN requires one forward pass and one backward pass. Since we use
a small fixed number of references, this computation scales linearly
with the number of nodes and edges in the subgraph. One advantage
of BIGEN over Shapley methods is that it can be applied to nonlinear
(noisy) FCMs since the contribution score is still linear w.r.t. the
gradients, thus satisfying the efficiency axiom \citep{shapley1953value}.

\section{Related Work}

 Causal structure discovery-based techniques have been recently
used to find the root cause(s) of faults in cloud applications \citep{arnold2007temporal,wang2018cloudranger}.
More recently, \citet{budhathoki2022causal} introduced a causal structure
based root cause explanation for outlier using Shapley values. The
authors represented the dependence of an outlier leaf directly in
terms of the noise of ancestor nodes. This approach facilitates intervention
on the noise distributions. Their work, however, assumes that the
outlier does not originate from the connections in the functional
causal models. In real computer network systems, e.g., the connections
may be broken or malfunction, causing anomalies in downstream nodes.
In our present work, we relax this constraint and consider also changes
in the weights as one possible source of the root cause, in addition
to the node anomalies.

\citet{budhathoki2021did} model a distribution change of a mechanism
for certain nodes and use a new dataset under this new mechanism to
fit a new model. In contrast, our work models the uncertainty in the
edges of the mechanisms and allows for temporary anomaly edge noise
causing a faulty batch of observations. Under Bayesian view, we employ
the MAP estimate of the faulty edge noise deviation from the normal
edge prior. By estimating noise in both edges and nodes, we can quantify
the contribution of each noise term to the outlier score observed
at the leaf. This can be viewed as intervention and counterfactual
analysis on the nodes and edges. To the best of our knowledge, we
are the first to explain anomalies in both nodes and edges based on
the given causal structure. A different approach from our work, which
combines RCA and causal discovery in one framework, is proposed by
\citet{ikram2022root}. 

In the recent domain of explainable AI, methods based on Shapley value
\citep{shapley1953value,sundararajan2020many} have gained increasing
popularity. These methods use an axiomatic approach to design attribute
functions with desirable properties, e.g., being fair, unique, and
efficient. These methods explain prediction outcomes by attributing
the prediction score back through the deep neural networks to the
input features \citep{erion2021improving,yang2022re}. Among them,
integral-based attribution methods are the most efficient, as they
use only a reference to represent the absence of the input signal,
rather than a random feature from the training data \citep{lundberg2017unified,sundararajan2020many,samek2021explaining}.
However, these methods are designed for explaining neural network
predictions, which differs from our goal of RCA.

\section{Experiments}

We run experiments on random graph datasets and two real-world settings,
namely a micro cloud service and a supply chain scenarios. We compare
our methods against three baselines described in the Preliminaries
section, as well as a naive approach:
\begin{enumerate}
\item \textbf{Shapley} (classic): This method employs Shapley values as
defined in Eq.~\ref{eq:shapley-classic} for the contribution of
each node.
\item \textbf{Sampling}. This method calculates the Shapley values by selecting
random subsets and weighted least squares regression \citep{janzing2020feature,covert2021improving},
as indicated in Eq.~\ref{eq:shapley-sampling}.
\item \textbf{Permutation}: This method computes the Shapley values through
permutation sampling \citep{strumbelj2014explaining}, as shown in
Eq.~\ref{eq:shapley-permute}.
\item \textbf{Naive}: This method uses the marginal distribution, i.e.,
the observational distribution, of each node $X_{j}$ to compute the
Shapley values.
\item \textbf{BIGEN} (ours): We use the contribution scores outlined in
Eq.~\ref{eq:integ-node}, \ref{eq:integ-egde} to assess the node
and edge contributions.
\end{enumerate}
For all methods, we use the outlier score in Eq.~\ref{eq:normal-cdf}
and assess the contribution to this score by each ancestor node and
edge. For the edge score of the baselines, we use the outer product
of the node scores to estimate the edge score $s_{edge}=s_{node}s_{node}^{T}$,
where $s_{node}=(s_{1},\dots,s_{n})$ represents the contributions
of all the ancestor nodes (including the target node). This score
quantifies the level of anomaly associated with an edge $e_{ij}$
by combining the scores observed at nodes $i$ and $j$.

We adhere to the approach of \citet{budhathoki2022causal} and employ
NDCG@$k$ \citep{jarvelin2017ir} to gauge rankings based on graded
relevance of outcomes. NDCG@$k$ yields values within the $[0,1]$
range, with higher scores indicating that highly ``relevant'' root
causes are assigned higher ranks. To establish the ground truth relevance
of all nodes, we assign zero relevance scores to non-root causes,
and invert the ranking of injected root causes, akin to the methodology
in \citep{budhathoki2022causal}.

\subsection{Random graph datasets}

We randomly generate 1,000 causal graphs with varying number of nodes
in the range from 10 to 10000 nodes. The noisy causal mechanisms follow
Definition~\ref{def:Noisy-Causal-Mechanism.} with $\alpha_{j}^{-1}=1$,
$\beta_{ij}^{-1}=0.01$, and $\mu_{ij}\sim|\mathcal{N}(0,1)|$. We
draw normal data from these noisy FCMs, following their generative
process. For the abnormal data, we randomly select a target node $X_{n}$
from this causal graph. We then choose among its ancestor node either
$k\in[1,\dots,m]$ root-cause nodes, or $l\in[1,\dots,m]$ root-cause
edges, or $k+l$ root-cause nodes and edges. Here, $m$ is chosen
to be $10\%$ of the number of nodes in the subgraph. We inject outlier
noises into the nodes and edges to create the ground truths as follows:
The outlier node noise $\epsilon_{j}$ of node $j$ is randomly drawn
from $\mathcal{N}(a,b)$, where $a$ is drawn from $\pm\text{Uniform}(3,5)$
and $b$ is drawn from $\text{Uniform}(3,5)$. The outlier edge noise
$\xi_{ij}$ of each node $j$ is randomly drawn from $\mathcal{N}(am_{j},bs_{j})$,
where $a$ is drawn from $\pm\text{Uniform}(3,5)$, $b$ is drawn
from $\text{Uniform}(3,5)$, and $m_{j}$ represents the maximum magnitude
of the current weights $w_{ij}$ (i.e., $m_{j}=\max_{i}(|w_{ij}|)$),
and $s_{j}$ represents the maximum standard deviation $\sigma_{ij}$
(i.e., $s_{j}=\max_{i}(\sigma_{ij})$).

Fig.~\ref{fig:comparing-5-methods-3-tasks} presents a comparison
of the results obtained from all methods. On average, BIGEN outperforms
all the baselines in detecting the actual root causes of outliers
in the top $k$ candidates, at over NDCG 0.9. The Shapley, Sampling,
and Permutation methods can overall detect root causes of outliers,
at around NDCG 0.83, with similar performance across them. The Shapley
method seems to perform better than Sampling and Permutation in node
and edge anomaly attributions on average. The Naive method, on the
other hand, could not attribute the root causes correctly. Table~\ref{tab:random-graphs}
shows the details of node and edge attribution results for the case
when both type of anomaly are present. Overall, the edge score rankings
are on average lower than that of the node scores. This suggests that
it is greater difficulty in attributing root causes when edge anomalies
are present.
\begin{figure}
\begin{centering}
\includegraphics[width=0.9\columnwidth]{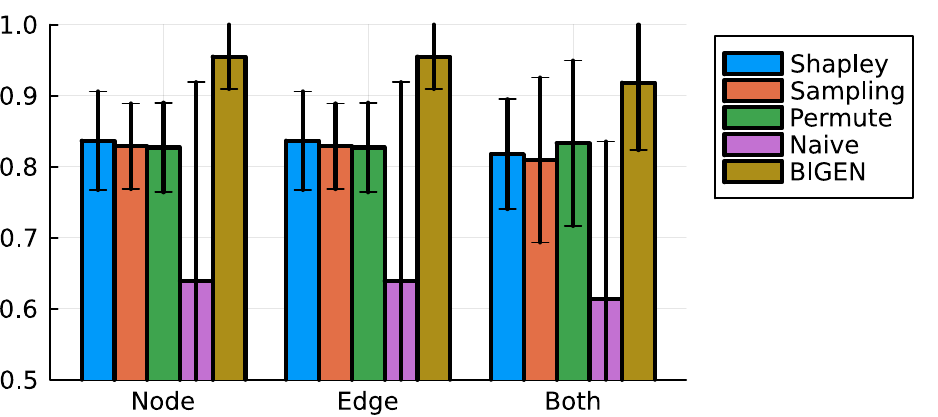}
\par\end{centering}
\caption{NDCG@$k$ ranking of root cause detection on random graphs.\label{fig:comparing-5-methods-3-tasks}}
\end{figure}
\begin{table}[H]
\begin{centering}
\begin{tabular}{l|c|c|c}
\hline 
\multirow{1}{*}{} & {\footnotesize{}Nodes} & {\footnotesize{}Edges} & {\footnotesize{}Nodes + Edges}\tabularnewline
\hline 
{\footnotesize{}Shapley{*}} & {\scriptsize{}$0.815\pm0.045$} & {\scriptsize{}$0.820\pm0.110$} & {\scriptsize{}$0.818\pm0.077$}\tabularnewline
{\footnotesize{}Sampling} & {\scriptsize{}$0.847\pm0.095$} & {\scriptsize{}$0.772\pm0.137$} & {\scriptsize{}$0.809\pm0.116$}\tabularnewline
{\footnotesize{}Permut.} & {\scriptsize{}$0.845\pm0.115$} & {\scriptsize{}$0.820\pm0.118$} & {\scriptsize{}$0.833\pm0.116$}\tabularnewline
{\footnotesize{}Naive} & {\scriptsize{}$0.691\pm0.134$} & {\scriptsize{}$0.536\pm0.303$} & {\scriptsize{}$0.633\pm0.222$}\tabularnewline
{\footnotesize{}BIGEN} & {\scriptsize{}$\mathbf{0.898\pm0.085}$} & {\scriptsize{}$\mathbf{0.938\pm0.105}$} & {\scriptsize{}$\mathbf{0.918\pm0.095}$}\tabularnewline
\hline 
\end{tabular}
\par\end{centering}
\caption{NDCG@$k$ for root cause detection in random graphs. ({*}) Due to
exponential runtime, for \textgreater 20 nodes, we compute Shapley
values using early stopping when the contribution score does not change
much \citep{dowhy_gcm}. \label{tab:random-graphs}}
\end{table}

\paragraph{Runtime}

Next, we compare the runtime of BIGEN to Shapley-based methods. Fig.~\ref{fig:runtime}
shows the runtime of all methods on a Ubuntu 20.04 workstation with
an Intel Xeon E5-1650 v4 CPU and 46Gb RAM. Notably, both the BIGEN
and Naive RCA methods exhibit linear time complexities relative to
the number of upstream nodes, completing computations in less than
a minute for target nodes with roughly 200 ancestors. In contrast,
the Shapley method demonstrates exponential complexity, requiring
over 2 hours for cases involving more than 20 nodes. This disparity
arises due to the Shapley method's necessity to iterate through all
conceivable subsets for the computation of the weighted average of
each subset's contribution score. The Sampling and Permutation methods
tend to exhibit polynomial time complexities, taking just slightly
over 1 hour to compute for a target outlier node featuring around
200 ancestor nodes. The Naive method independently computes the contribution
of each node, while BIGEN capitalises on gradient information and
baseline noises to compute the contribution of each noise term within
the given context.
\begin{figure}
\begin{centering}
\includegraphics[width=0.9\columnwidth]{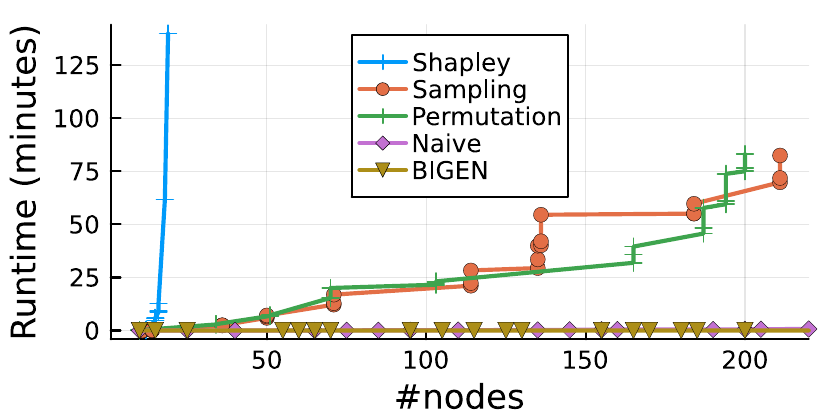}
\par\end{centering}
\caption{Comparing wall-clock runtime (in minutes) between BIGEN and Causal-RCA
methods with increasing number of nodes. The attribution complexity
of BIGEN and Naive is $O(d+e)$, while Shapley is $O(2^{(d+e)})$.
Sampling and Permutation methods tend to have polynomial time. \label{fig:runtime}}

\end{figure}

The key takeaway from this experiment is that BIGEN provides relevant
top-$k$ ranking across values of $k$ while reducing the computation
time significantly. This is because BIGEN can model the uncertainty
in the causal edges to handle situations with noisy mechanisms.

\subsection{Root causes of observed latencies in cloud services}

In this experiment, we study the root causes of unexpected observed
latencies in a microservice of an online shop. Microservices are building
blocks for complex mobile and Internet-of-Things (IoT) applications.
Therefore, it is essential to keep service access delay minimal \citep{guo2022joint}.
Any unusual delay should be quickly studied to identify the root causes
and resolve the problem \citep{ikram2022root}. Often, the delay at
a node is composed of the communication delay between itself and each
parent node (due to network bandwidth, load, and other turbulences)
and execution delay of each node (due to the request complexity, node
computational power and load). 

We use the microservice architecture described in \citep{dowhy_gcm}.
We assume the delay noise in each node $X_{j}$ is a Gaussian noise
with unknown variance $\alpha_{j}^{-1}$ and in each edge $W_{ij}$
is a Gaussian noise with unknown variance $\beta_{ij}^{-1}$. We thus
make a realistic assumption and allow for noisy connections to account
for the above scenarios. The task is to explain the root cause of
an unwanted observed latency at the customer end in processing an
online order. This service involves multiple other web-services described
by a causal dependency graph involving ten other services \citep{dowhy_gcm},
part of the causal graph is shown in Fig.~4 in the Appendix. Assuming
that we observe latencies in the order confirmation of the Website
leaf node, we also assume all services are synchronised.

For the abnormal data of the target node $\texttt{Website}$ we select
among its ancestor node either $k\in[1,\dots,3]$ root-cause nodes
or $l\in[1,\dots,3]$ root-cause edges or $k+l$ root-cause nodes
and edges, then we inject anomalous noises by random sampling the
noise outside $3\sigma$ ranges of their normal operation distributions
to create the ground truths.

We use similar models as in the previous section for this experiment,
i.e., assuming the causal graph is given then fit the noisy FCMs models
to the training data collected during normal operations. 

\paragraph{Results}

Table~\ref{tab:ranking-micro} displays the outcomes achieved by
all methods, revealing the NDCG@k values for root cause detection
within a micro cloud service environment. It shows that the Shapley-based
methods can detect the root cause in both the nodes and edges. However,
the edge score ranking is lower than that of the node score. This
suggests that the deterministic weights methods fall short in high
weight noise applications. However, the Shapley method shows better
attribution results than Sampling and Permutation. The Naive method
can detect node root causes but not edges root causes. The BIGEN,
in contrast, can model the uncertainty in the causal links therefore
can account better the contribution of the root causes to the observed
outliers at leaves. Overall, BIGEN shows better root cause attributions
across nodes, edges, and both compared to the baselines.
\begin{table}[H]
\begin{centering}
\begin{tabular}{l|c|c|c}
\hline 
\multirow{1}{*}{} & {\footnotesize{}Nodes } & {\footnotesize{}Edges } & {\footnotesize{}Nodes + Edges}\tabularnewline
\hline 
{\footnotesize{}Shapley} & {\scriptsize{}$0.904\pm0.047$} & {\scriptsize{}$0.778\pm0.091$} & {\scriptsize{}$0.841\pm0.069$}\tabularnewline
{\footnotesize{}Sampling} & {\scriptsize{}$0.918\pm0.050$} & {\scriptsize{}$0.715\pm0.140$} & {\scriptsize{}$0.816\pm0.095$}\tabularnewline
{\footnotesize{}Permut.} & {\scriptsize{}$0.914\pm0.048$} & {\scriptsize{}$0.726\pm0.112$} & {\scriptsize{}$0.820\pm0.080$}\tabularnewline
{\footnotesize{}Naive } & {\scriptsize{}$0.797\pm0.102$} & {\scriptsize{}$0.465\pm0.199$} & {\scriptsize{}$0.631\pm0.150$}\tabularnewline
{\footnotesize{}BIGEN } & {\scriptsize{}$0.910\pm0.060$} & {\scriptsize{}$\mathbf{0.924\pm0.076}$} & {\scriptsize{}$\mathbf{0.917\pm0.068}$}\tabularnewline
\hline 
\end{tabular}
\par\end{centering}
\caption{NDCG@$k$ for RCA in a micro cloud service.\label{tab:ranking-micro}}
\end{table}

Fig.~\ref{fig:varying-k-micro-service} shows the NDCG raking at
different $k$ between the methods. It shows that BIGEN can rank the
root cause better on average than all methods for across top-$k\ge2$
values. The increasing NDCG@$k$ scores of all methods with larger
$k$ values show that the relevant root causes have more chance to
appear in the top-$k$ results.
\begin{figure}
\begin{centering}
\includegraphics[width=0.9\columnwidth]{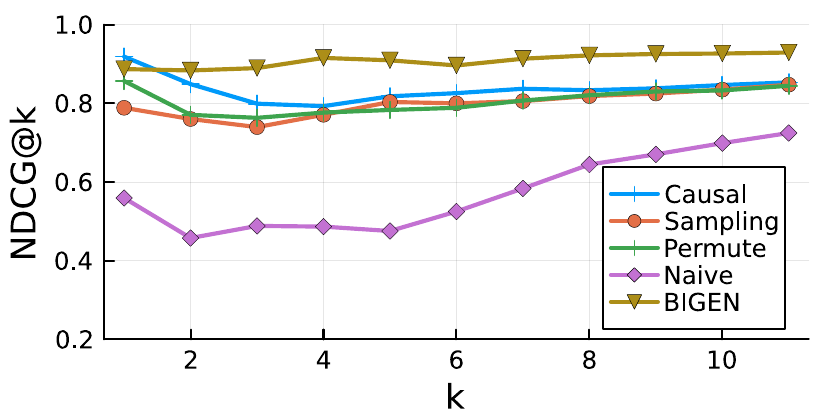}
\par\end{centering}
\caption{NDCG@$k$ when varying the number of $k$, error bars not included
to keep the figure uncluttered.\label{fig:varying-k-micro-service}}
\end{figure}

\subsection{Root Causes of Outliers in a Supply Chain}

In this experiment, we apply our methods to model the noisy interactions
between businesses. The behaviour in supply chains usually includes
complex interaction of organisation structure, and time delays between
decision and implementation \citep{power2005supply}. To gain competitive
advantage, it is critical that businesses need to optimise their supply
chain operations to ensure the flow of physical goods between trading
partners. Thus, it is important that delays to this flow can be identified
and rectified in a reliable and timely manner. Often, business decisions
are made based on predictions of real demands and constraints and
are not a deterministic process \citep{power2005supply}. 

We study RCA of outliers in a supply chain process shown in Fig.~5
in the Appendix. In this process, a retailer need to submit orders
to a vendor based on its inventory constraint of current stock and
the demand forecasting of near future sale to fulfil future customers'
purchases faster. The vendors will then confirm the retailer's purchase
orders with a random delay. When these orders are confirmed, the goods
will be sent to the retailer again with some random delay and may
arrive at different times. The random delays can be due to various
overhead costs such as decisions of the vendors' manager, variable
packaging and shipping operations \citep{croson2014order}. We assume
noisy linear FCMs for the supply chain process where the noise term
of each node is a Gamma distribution to mimic real-world settings,
i.e., heavily-tailed behaviour. We assume Gaussian noises for the
edges to account for the fluctuations in the causal links between
nodes.

For the abnormal data of the target node $\texttt{Received}$ we randomly
select among its ancestor nodes either $k\in\{1,2\}$ nodes or $l\in\{1,2\}$
edges or $k=1,l=1$ node and edge, then inject anomalous noises by
random sampling the node noise $\epsilon\sim\text{Uniform}(3,5)$
and edge connection noise $\xi\sim\text{Uniform}(3,5)$ then collect
the outliers.

\paragraph{Results}

Table~\ref{tab:ranking-supply} presents the outcomes of all methods
employed for root cause detection within a supply chain context. The
result shows good performance among all methods with our proposed
method BIGEN has highest detection ranking followed by Shapley methods,
then Naive-RCA. Notably, our method can accurately identify the anomalous
link in the process, at NDCG@$k$ of $0.98$ on average. This confirms
the effectiveness of the noisy FCM models and the BIGEN attribution
method.
\begin{table}[H]
\begin{centering}
\begin{tabular}{l|c|c|c}
\hline 
\multirow{1}{*}{} & {\footnotesize{}Nodes} & {\footnotesize{}Edges} & {\footnotesize{}Nodes + Edges}\tabularnewline
\hline 
{\footnotesize{}Shapley} & {\scriptsize{}$0.891\pm0.049$} & {\scriptsize{}$0.927\pm0.072$} & {\scriptsize{}$0.909\pm0.061$}\tabularnewline
{\footnotesize{}Sampling} & {\scriptsize{}$0.890\pm0.050$} & {\scriptsize{}$0.925\pm0.074$} & {\scriptsize{}$0.908\pm0.062$}\tabularnewline
{\footnotesize{}Permut.} & {\scriptsize{}$0.892\pm0.049$} & {\scriptsize{}$0.926\pm0.072$} & {\scriptsize{}$0.909\pm0.061$}\tabularnewline
{\footnotesize{}Naive} & {\scriptsize{}$0.856\pm0.055$} & {\scriptsize{}$0.915\pm0.091$} & {\scriptsize{}$0.886\pm0.073$}\tabularnewline
{\footnotesize{}BIGEN} & {\scriptsize{}$0.890\pm0.052$} & {\scriptsize{}$\mathbf{0.980\pm0.045}$} & \textbf{\scriptsize{}$\mathbf{0.935\pm0.048}$}\tabularnewline
\hline 
\end{tabular}
\par\end{centering}
\caption{NDCG@$k$ for RCA in a supply chain.\label{tab:ranking-supply}}
\end{table}

\section{Conclusions}

We have introduced a framework for identifying the root causes of
unexpected events observed at leaf nodes in causal generative processes.
Through modelling noises in both nodes and edges, we proposed noisy
functional causal models that enable the inference of both types of
noises, making them suitable for the application of attribution algorithms.
Furthermore, we have developed an efficient attribution score based
on integrated gradient, which can be readily applied to graphs with
thousands of nodes. Our experimental results demonstrated the effectiveness
of our proposed methods.

\noindent \bibliography{causal}

\begin{thebibliography}{29}
\providecommand{\natexlab}[1]{#1}

\bibitem[{Arnold, Liu, and Abe(2007)}]{arnold2007temporal}
Arnold, A.; Liu, Y.; and Abe, N. 2007.
\newblock Temporal causal modeling with graphical granger methods.
\newblock In \emph{13th ACM SIGKDD}, 66--75.

\bibitem[{Bassett and Deride(2019)}]{bassett2019maximum}
Bassett, R.; and Deride, J. 2019.
\newblock Maximum a posteriori estimators as a limit of Bayes estimators.
\newblock \emph{Mathematical Programming}, 174: 129--144.

\bibitem[{Bishop and Nasrabadi(2006)}]{bishop2006pattern}
Bishop, C.~M.; and Nasrabadi, N.~M. 2006.
\newblock \emph{Pattern recognition and machine learning}, volume~4.
\newblock Springer.

\bibitem[{Bl{\"o}baum et~al.(2022)Bl{\"o}baum, G{\"o}tz, Budhathoki, Mastakouri, and Janzing}]{dowhy_gcm}
Bl{\"o}baum, P.; G{\"o}tz, P.; Budhathoki, K.; Mastakouri, A.~A.; and Janzing, D. 2022.
\newblock DoWhy-GCM: An extension of DoWhy for causal inference in graphical causal models.
\newblock \emph{arXiv preprint arXiv:2206.06821}.

\bibitem[{Budhathoki et~al.(2021)Budhathoki, Janzing, Bloebaum, and Ng}]{budhathoki2021did}
Budhathoki, K.; Janzing, D.; Bloebaum, P.; and Ng, H. 2021.
\newblock Why did the distribution change?
\newblock In \emph{AISTATS}, 1666--1674.

\bibitem[{Budhathoki et~al.(2022)Budhathoki, Minorics, Bl{\"o}baum, and Janzing}]{budhathoki2022causal}
Budhathoki, K.; Minorics, L.; Bl{\"o}baum, P.; and Janzing, D. 2022.
\newblock Causal structure-based root cause analysis of outliers.
\newblock In \emph{ICML}, 2357--2369.

\bibitem[{Covert and Lee(2021)}]{covert2021improving}
Covert, I.; and Lee, S.-I. 2021.
\newblock Improving kernelshap: Practical shapley value estimation using linear regression.
\newblock In \emph{International Conference on Artificial Intelligence and Statistics}, 3457--3465. PMLR.

\bibitem[{Croson et~al.(2014)Croson, Donohue, Katok, and Sterman}]{croson2014order}
Croson, R.; Donohue, K.; Katok, E.; and Sterman, J. 2014.
\newblock Order stability in supply chains: Coordination risk and the role of coordination stock.
\newblock \emph{Production and Operations Management}, 23(2): 176--196.

\bibitem[{Dhaou et~al.(2021)Dhaou, Bertoncello, Gourv{\'e}nec, Garnier, and Le~Pennec}]{dhaou2021causal}
Dhaou, A.; Bertoncello, A.; Gourv{\'e}nec, S.; Garnier, J.; and Le~Pennec, E. 2021.
\newblock Causal and interpretable rules for time series analysis.
\newblock In \emph{27th ACM SIGKDD}, 2764--2772.

\bibitem[{Djurdjanovic, Lee, and Ni(2003)}]{djurdjanovic2003watchdog}
Djurdjanovic, D.; Lee, J.; and Ni, J. 2003.
\newblock Watchdog Agent an infotronics based prognostics approach for product performance degradation assessment and prediction.
\newblock \emph{Advanced Engineering Informatics}, 17(3-4): 109--125.

\bibitem[{Erion et~al.(2021)Erion, Janizek, Sturmfels, Lundberg, and Lee}]{erion2021improving}
Erion, G.; Janizek, J.~D.; Sturmfels, P.; Lundberg, S.~M.; and Lee, S.-I. 2021.
\newblock Improving performance of deep learning models with axiomatic attribution priors and expected gradients.
\newblock \emph{Nature Machine Intelligence}, 3(7): 620--631.

\bibitem[{Guo, Tang, and Tang(2022)}]{guo2022joint}
Guo, F.; Tang, B.; and Tang, M. 2022.
\newblock Joint optimization of delay and cost for microservice composition in mobile edge computing.
\newblock \emph{World Wide Web}, 25(5): 2019--2047.

\bibitem[{Ikram et~al.(2022)Ikram, Chakraborty, Mitra, Saini, Bagchi, and Kocaoglu}]{ikram2022root}
Ikram, A.; Chakraborty, S.; Mitra, S.; Saini, S.; Bagchi, S.; and Kocaoglu, M. 2022.
\newblock Root Cause Analysis of Failures in Microservices through Causal Discovery.
\newblock \emph{NeurIPS}, 35: 31158--31170.

\bibitem[{Janzing, Minorics, and Bl{\"o}baum(2020)}]{janzing2020feature}
Janzing, D.; Minorics, L.; and Bl{\"o}baum, P. 2020.
\newblock Feature relevance quantification in explainable AI: A causal problem.
\newblock In \emph{International Conference on artificial intelligence and statistics}, 2907--2916. PMLR.

\bibitem[{J{\"a}rvelin and Kek{\"a}l{\"a}inen(2017)}]{jarvelin2017ir}
J{\"a}rvelin, K.; and Kek{\"a}l{\"a}inen, J. 2017.
\newblock IR evaluation methods for retrieving highly relevant documents.
\newblock In \emph{ACM SIGIR Forum}, volume~51, 243--250. ACM New York, NY, USA.

\bibitem[{Lundberg and Lee(2017)}]{lundberg2017unified}
Lundberg, S.~M.; and Lee, S.-I. 2017.
\newblock A unified approach to interpreting model predictions.
\newblock \emph{NeurIPS}, 30.

\bibitem[{Ni et~al.(2017)Ni, Cheng, Zhang, Song, Yan, Chen, and Zhang}]{ni2017ranking}
Ni, J.; Cheng, W.; Zhang, K.; Song, D.; Yan, T.; Chen, H.; and Zhang, X. 2017.
\newblock Ranking causal anomalies by modeling local propagations on networked systems.
\newblock In \emph{2017 IEEE ICDM}, 1003--1008. IEEE.

\bibitem[{Pearl(2009)}]{pearl2009causality}
Pearl, J. 2009.
\newblock \emph{Causality: Models, Reasoning, and Inference (2nd ed.)}.
\newblock Cambridge university press.

\bibitem[{Peters, Janzing, and Sch{\"o}lkopf(2017)}]{peters2017elements}
Peters, J.; Janzing, D.; and Sch{\"o}lkopf, B. 2017.
\newblock \emph{Elements of causal inference: foundations and learning algorithms}.
\newblock The MIT Press.

\bibitem[{Pool et~al.(2020)Pool, Beyrami, Gopal, Aazami, Gupchup, Rowland, Li, Kanani, Cutler, and Gehrke}]{pool2020lumos}
Pool, J.; Beyrami, E.; Gopal, V.; Aazami, A.; Gupchup, J.; Rowland, J.; Li, B.; Kanani, P.; Cutler, R.; and Gehrke, J. 2020.
\newblock Lumos: A library for diagnosing metric regressions in web-scale applications.
\newblock In \emph{26th ACM SIGKDD}, 2562--2570.

\bibitem[{Power(2005)}]{power2005supply}
Power, D. 2005.
\newblock Supply chain management integration and implementation: a literature review.
\newblock \emph{Supply chain management: an International journal}, 10(4): 252--263.

\bibitem[{Samek et~al.(2021)Samek, Montavon, Lapuschkin, Anders, and M{\"u}ller}]{samek2021explaining}
Samek, W.; Montavon, G.; Lapuschkin, S.; Anders, C.~J.; and M{\"u}ller, K.-R. 2021.
\newblock Explaining deep neural networks and beyond: A review of methods and applications.
\newblock \emph{Proceedings of the IEEE}, 109(3): 247--278.

\bibitem[{Shapley et~al.(1953)}]{shapley1953value}
Shapley, L.~S.; et~al. 1953.
\newblock A value for n-person games.

\bibitem[{{\v{S}}trumbelj and Kononenko(2014)}]{strumbelj2014explaining}
{\v{S}}trumbelj, E.; and Kononenko, I. 2014.
\newblock Explaining prediction models and individual predictions with feature contributions.
\newblock \emph{Knowledge and information systems}, 41: 647--665.

\bibitem[{Sundararajan and Najmi(2020)}]{sundararajan2020many}
Sundararajan, M.; and Najmi, A. 2020.
\newblock The many Shapley values for model explanation.
\newblock In \emph{ICML}, 9269--9278.

\bibitem[{Wang et~al.(2018)Wang, Xu, Ma, Lin, Pan, Wang, and Chen}]{wang2018cloudranger}
Wang, P.; Xu, J.; Ma, M.; Lin, W.; Pan, D.; Wang, Y.; and Chen, P. 2018.
\newblock Cloudranger: Root cause identification for cloud native systems.
\newblock In \emph{2018 18th IEEE/ACM International Symposium on Cluster, Cloud and Grid Computing (CCGRID)}, 492--502. IEEE.

\bibitem[{Yang et~al.(2022)Yang, Akhtar, Wen, Shah, and Mian}]{yang2022re}
Yang, P.; Akhtar, N.; Wen, Z.; Shah, M.; and Mian, A.~S. 2022.
\newblock Re-calibrating Feature Attributions for Model Interpretation.
\newblock In \emph{11th ICLR}.

\bibitem[{Yi and Park(2021)}]{yi2021semi}
Yi, J.; and Park, J. 2021.
\newblock Semi-supervised bearing fault diagnosis with adversarially-trained phase-consistent network.
\newblock In \emph{27th ACM SIGKDD}, 3875--3885.

\bibitem[{Zhang, Branicky, and Phillips(2001)}]{zhang2001stability}
Zhang, W.; Branicky, M.~S.; and Phillips, S.~M. 2001.
\newblock Stability of networked control systems.
\newblock \emph{IEEE control systems magazine}, 21(1): 84--99.

\end{thebibliography}

\end{document}